\def\etal{{\em et al.}}
\def\mD{{\mathcal D}}
\def\mH{{\mathcal H}}
\def\mP{{\mathcal P}}
\def\mS{{\mathcal S}}
\def\mX{{\mathcal X}}
\def\mY{{\mathcal Y}}
\def\mZ{{\mathcal{Z}}}
\DeclareMathAlphabet\mathbfcal{OMS}{cmsy}{b}{n}
\def\0{{\bf 0}}
\def\1{{\bf 1}}
\def\bG{{\bf G}}
\def\bI{{\bf I}}
\def\bM{{\bf M}}
\def\bx{{\bf x}}
\def\by{{\bf y}}
\def\bz{{\bf z}}
\def\mmE{{\mathbb E}}
\def\bx{{\bf x}}
\def\by{{\bf y}}
\def\bz{{\bf z}}
\newtheorem{deftn}{Definition}
\newtheorem{thm}{Theorem}
\newtheorem{remark}{Remark}
\newtheorem{proof}{Proof}
\def\blue{\textcolor{blue}}
\def\zhang{\textcolor{black}}
\def\kui{\textcolor{black}}
\def\ygyan{\textcolor{black}}
\def\qi{\textcolor{black}}
\def\yong{\textcolor{black}}
\def\error{\textcolor{black}}
\title{Dual Reconstruction Nets for Image Super-Resolution with Gradient Sensitive Loss}
\author{
	Yong Guo$^1$, Qi Chen$^1$, Jian Chen$^1$, Junzhou Huang$^2$,\\
	\textbf{Yanwu Xu$^3$, Jiezhang Cao$^1$, Peilin Zhao$^4$, Mingkui Tan$^1$\thanks{Corresponding author.}} \\
	$^1$South China University of Technology, $^2$University of Texas at Arlington, \\
	$^3$Guangzhou Shiyuan Electronics Co.,Ltd, $^4$Tencent AI Lab \\
	\{guo.yong, sechenqi, secaojiezhang\}@mail.scut.edu.cn, \{ellachen, mingkuitan\}@scut.edu.cn, \\
	jzhuang@uta.edu, xuyanwu@cvte.com, peilinzhao@hotmail.com
}
\begin{document}

\maketitle

\begin{abstract}
Deep neural networks have exhibited promising performance in image super-resolution \yong{(SR)} due to the power in learning the non-linear mapping from low-resolution (LR) images to high-resolution (HR) images. However, most deep learning methods employ feed-forward architectures, \yong{and} thus the dependencies between LR and HR images are not fully exploited, leading to limited learning performance. Moreover, most deep learning based SR \yong{methods} apply the pixel-wise reconstruction error as the loss, which, however, may fail to capture high-frequency information and \yong{produce} perceptually unsatisfying results, whilst the recent perceptual loss relys on some pre-trained deep model and they may not generalize well. In this paper, we introduce a mask to separate the image into low- and high-frequency parts based on image gradient magnitude, and then devise a gradient sensitive loss to well capture the structures in the image without sacrificing the recovery of low-frequency \yong{content}. Moreover, by investigating the duality in SR, we develop a dual reconstruction network (DRN) to improve the SR performance. We provide theoretical analysis on the generalization performance of our method  and demonstrate its effectiveness and superiority with thorough experiments.
\end{abstract}

\section{Introduction}

Super-resolution (SR) aims to learn a nonlinear mapping to reconstruct high-resolution (HR) images from low-resolution (LR) input images, and it has been widely desired in many real-world scenarios, including image/video reconstruction~\cite{dong2014learning, huang2015single, kim2016deeply, shi2016real}, fluorescence microscopy~\cite{nehme2018deep} and face recognition~\cite{chowdhuri2012very}. SR is a \yong{typical} ill-posed inverse problem. In the last two decades, many attempts have been made to address it, mainly including interpolation based methods~\cite{yang2014single} and reconstruction based methods~\cite{dong2014learning, gu2015convolutional,jianchao2008image,kim2016accurate, huang2015single,ledig2016photo}.

Recently, deep neural networks (DNNs) have emerged as a powerful tool for SR~\cite{dong2014learning} and have shown significant advantages over traditional methods in terms of performance and inference speed~\cite{dong2014learning, ledig2016photo, lai2017deep, lim2017enhanced}. However, these methods may have some underlying limitations. First, \yong{for} deep learning based SR methods, the performance highly depends on the choice of the loss function~\cite{ledig2016photo}. The most widely applied loss is the pixel-wise error  between the recovered HR image and the ground truth image, such as the mean squared error (MSE) and mean absolute error (MAE). This kind of losses is helpful to improve the peak signal-to-noise (PSNR), a common measure to evaluate SR algorithms. However, they may make the model lose the high-frequency details and thus fail to capture perceptually relevant differences~\cite{johnson2016perceptual, ledig2016photo}. To address this issue, recently, some researchers have developed the perceptual loss~\cite{johnson2016perceptual,ledig2016photo} to produce photo-realistic images. However, the computation of perceptual loss relies on some pre-trained model (such as VGG~\cite{simonyan2014very}), which has profound influence on the performance. In particular, the method may not generalize well if the pre-trained model is not well trained. \yong{In practice, it may incur significant changes in \yong{image} content (see Figure 2 in~\cite{ledig2016photo} and Figure \ref{fig:image_compare_4x} in this paper).
}


\kui{Moreover, most deep learning methods are trained in a simple \yong{feed-forward} scheme and do not fully exploit the mutual dependencies between low- and high-resolution images~\cite{DBPN2018}. To improve the performance, one may increase the depth or width of the networks, which, however, may \yong{incur} more memory consumption and computation cost, and require more data \yong{for training.}
	To address this issue, the back-projection has been investigated~\cite{DBPN2018}. Specifically, a deep back-projection network (DBPN) is developed to \yong{improve the learning performance.} However, in DBPN, the dependencies between LR and HR images are still not fully exploited, since it does not consider the loss between the down-sampled image and the original LR image. As a result, the representation capacity of deep models may not be well exploited.}


We seek to address the above issues in two directions. \yong{\textbf{First}}, we devise a novel gradient sensitive loss relying on the gradient magnitude of an image. To do so, we hope to well recover both low-  and  high-frequency information at the same time. To achieve high performance of SR, besides the pixel-wise PSNR, we also seek to achieve high PSNR \yong{score} over image gradients. \yong{\textbf{Second}}, by exploiting the duality in LR and SR images, we formulate the SR problem as a dual learning task and we present a dual reconstruction network (DRN) by introducing an additional dual module to exploit the bi-directional information of LR and HR images.

In this paper, we make the following contributions. First, we devise a novel gradient sensitive loss to improve the reconstruction performance. Second, we develop a dual learning scheme for image super-resolution by exploiting the mutual dependencies between low- and high-resolution images via the  task duality. Third, we theoretically  prove the effectiveness of the proposed dual reconstruction scheme for SR in terms of generalization ability. \kui{Our result on generalization bound of dual learning is more general than \citep{pmlr-v70-xia17a}.} Last, extensive experiments demonstrate the effectiveness of the proposed gradient sensitive loss and dual reconstruction scheme.

\section{Related work}

\textbf{Super-resolution.}
\zhang{One classic SR method is the interpolation-based approach, such as cubic-based~\cite{hou1978cubic}, edge-directed~\cite{allebach1996edge,li2001new} and wavelet-based~\cite{nguyen2000efficient} methods. These methods, however, may oversimplify the SR problem and usually generate blurry images with overly smooth textures~\cite{ledig2016photo,tong2017image}.}
Besides, there are some other methods, such as sparsity-based techniques~\cite{gu2015convolutional,jianchao2008image} and neighborhood embedding~\cite{gao2012image,timofte2013anchored}, which have been widely used in real-world applications.

\zhang{Another classic method is the reconstruction-based method~\cite{baker2002limits,ben2007penrose,lin2004fundamental}, which takes LR images to reconstruct the corresponding HR images. Following such method, many CNN-based methods~\cite{hui2018fast,kim2016accurate,NIPS2016_6172, tai2017image,tong2017image,wang2018recovering,zhang2017learning,zhang2018residual} were developed to learn a reconstruction mapping and achieve state-of-the-art performance. However, all these methods only consider the information from HR images and ignore the mutual dependencies between LR and HR images. Very recently, Haris~\etal\cite{DBPN2018} propose a back-projection network and find that mutual dependencies are able to enhance the performance of SR algorithms. 
}


\textbf{Loss function.}
The loss function plays a very important role in image super-resolution. The mean squared error (MSE)~\cite{dong2014learning, huang2015single, kim2016accurate} and mean absolute error (MAE)~\cite{zhang2018residual} are two widely used loss functions:
\begin{equation}
\begin{array}{ll}
\ell_{\mathrm{MSE}}(\bI^{\mathrm{H}}, \hat \bI^{\mathrm{H}}) = \left\| \bI^{\mathrm{H}} - \hat \bI^{\mathrm{H}} \right\|_F^{2}, \mathrm{~and~~~}  \ell_{\mathrm{MAE}}(\bI, \widehat{\bI}) = \left\| \bI - \hat \bI\right\|_1,
\end{array}
\end{equation}
where $\left\| \cdot \right\|_1$ denotes $\ell_1$-norm, and $\bI$ and $\widehat \bI$ denote the ground-truth and the predicted images.
While $\ell_{\mathrm{MSE}}$ is a standard choice which is directly related to PSNR~\cite{dong2014learning}, $\ell_{\mathrm{MAE}}$ may be a better choice to produce sharp results~\cite{lim2017enhanced}. Nevertheless, the two loss functions can be used simultaneously. For example, in \cite{hui2018fast} they first train the network with $\ell_{\mathrm{MAE}}$ and
then fine-tune it by $\ell_{\mathrm{MSE}}$.
Recently, Lai~\etal\cite{lai2017deep} and Liao~\etal\cite{liao2015video} introduce the Charbonnier penalty which is a variant of $\ell_{\mathrm{MAE}}$.
Justin~\etal\cite{johnson2016perceptual} propose a perceptual loss, by minimizing the reconstruction error based on the extracted features, to improve the perceptual quality. More recently, Ledig~\etal\cite{ledig2016photo} leverages the adversarial loss to produce photo-realistic images. 
However, these methods\footnote{The summarization and comparison of loss functions can be found in Table~\ref{tab:loss_compare} of supplementary file.} take the whole image as input and do not distinguish between low- and high-frequency details. As a result, the low-frequency content and high-frequency structure information cannot be fully exploited.

\section{Gradient sensitive loss}
In this section, we propose a {gradient-sensitive} loss in order to preserve both low-frequency content and high-frequency structure of images for image super-resolution.
\error{As aforementioned}, optimizing the pixel-wise loss often lacks high-frequency structure information and may produces perceptually blurry images. To address this issue, we hope to  recover the image gradients as well in order to capture the high frequency structure information. Intuitively, one may exploit  the loss over gradients~\cite{mathieu2015deep}:
\begin{equation}
\begin{array}{ll}
\ell_{\mathrm{G}}(\bI, \widehat \bI) = \left\| \nabla_x \bI - \nabla_x \widehat \bI \right\|_1
+ \left\| \nabla_y \bI - \nabla_y \widehat \bI \right\|_1,
\end{array}
\end{equation}
where $\nabla_x \bI$ and $\nabla_y \bI$ denote the directional gradients of $\bI$ along the horizontal (denoted by $x$) and vertical (denoted by  $y$) directions, respectively.
Apparently, we cannot directly minimize $\ell_{\mathrm{G}}$ for SR. Instead, we can construct a joint loss by considering both pixel-level and gradient-level errors:
\begin{equation}\label{eq:pixel_gradient_loss}
\begin{array}{ll}
\ell_{\mathrm{GP}}(\bI, \widehat \bI) =
\ell_{\mathrm{G}} \left(\bI,\widehat \bI \right)
+ \lambda \ell_{\mathrm{P}} \left(\bI,\widehat \bI \right),
\end{array}
\end{equation}
where $\ell_{\mathrm{P}} $ is the pixel-level loss, which can be either $\ell_{\mathrm{MSE}} $ or $\ell_{\mathrm{MAE}}$ and $\lambda$ is a parameter to balance the two terms. Minimizing $\ell_{\mathrm{GP}}$ in (\ref{eq:pixel_gradient_loss}) will help to recover the gradients, but it means we have to sacrifice the accuracy over the pixels, and a good balance is often hard to made. In other words, the reconstruction performance in terms of PSNR over the image shall degrade when considering the recovery of gradients.

A natural questions arises: given an image, can we find a way to separate the high-frequency part from its low-frequency part and then impose losses over the two parts separately? If the answer is positive, the emphasis on the gradient-level loss will not affect the pixel-level loss and then the dilemma within $\ell_{\mathrm{GP}}$ shall be addressed.
Here, we develop a simple method concerning the above question. Specifically, we seek to find a mask $\bM$ to decompose the image $\bI$ by $$\bI = \bM  \odot \bI + (\1 -\bM)   \odot \bI,$$ where $M_{i, j} \in [0, 1]$.  Relying on the directional gradients  $\nabla_x \bI^{\mathrm{H}}$ and $\nabla_y \bI^{\mathrm{H}}$, we can easily devise such a mask. In fact, given the gradient magnitude $\bG$, where $ G_{i, j}= \sqrt{ (\nabla_x I_{i, j})^2 + (\nabla_y I_{i, j})^2 },$ we can define the mask as the normalization of $\bG$ into $[0, 1]$:
\begin{equation}
\bM = ({\bG-\min(\bG)})/({\max(\bG)-\min(\bG)}),
\end{equation}
where $\min(\bG)$ and $\max(\bG)$ denote the minimum and maximum value in $\bG$, respectively. It is clear that $\bM  \odot \bI$ and $(\1 -\bM)\odot\bI$ represent the low and high-frequency parts, separately.
Finally, we define our \textbf{gradient-sensitive} loss as
\begin{equation}\label{eq:gradient_masked_loss}
\ell_{\mathrm{GS}}(\bI, \hat \bI) =
\ell_{\mathrm{G}} (\bM \odot \bI, \bM \odot \widehat \bI )
+ \lambda \ell_{\mathrm{P}} ((\1 -\bM) \odot \bI, (\1 -\bM) \odot \widehat \bI ),
\end{equation}
where $\odot$ denotes the element-wise multiplication and $\lambda$ is a trade-off parameter.  Here, we adopt  $\ell_{\mathrm{MAE}}$ as the pixel-level loss $\ell_{\mathrm{P}} $.

In Eqn. (\ref{eq:gradient_masked_loss}), the gradient-level loss $\ell_{\mathrm{G}} (\bM \odot \bI, \bM \odot \widehat \bI )$ focuses on the high-frequency part and it helps to improve the reconstruction accuracy of gradients. Different from $\ell_{\mathrm{GP}}$ in (\ref{eq:pixel_gradient_loss}), in $\ell_{\mathrm{GS}}$, the pixel-level loss $\ell_{\mathrm{P}}$ focuses on low-frequency part, since the gradient information has been subtracted from $\bI$. As a result, the reconstruction accuracy over pixels will not suffer even though we put emphasis on gradient. Most importantly, since the gradient information can be well recovered, it will help to improve the overall performance in terms of PSNR and
\yong{visual quality} significantly.\footnote{We conduct an experiment to demonstrate the effectiveness of gradient sensitive loss (See Section \ref{Ablation_Study}).}

\section{Dual reconstruction network}

Most existing methods employ feed-forward architecture and focus on minimizing the reconstruction error between recovered image and the ground-truth. As such, they ignore the mutual dependencies between LR and HR images. As a result, the representation capacity are not fully exploited~\cite{DBPN2018,xia2017dual}. Here, we seek to investigate the duality in SR problems and propose a dual reconstruction scheme to fully exploiting the mutual dependencies  between LR and HR images to improve the performance.

\subsection{Dual reconstruction scheme for super-resolution}
Dual supervised learning (DSL) has been investigated in~\cite{xia2017dual, he2016dual} and shows that DSL can improve the practical performances of both tasks. Inspired by~\cite{xia2017dual}, we introduce an additional dual reconstruction of LR to improve the primal reconstruction of HR images.
We aim to simultaneously learn the primal mapping $P(\cdot)$ to reconstruct HR images and the dual mapping $D(\cdot)$ to reconstruct LR images.
Let $\bx \in \mathcal{X}$ be LR images and $\by \in \mathcal{Y}$ be HR images.
We formulate the SR problem as a dual reconstruction learning scheme as below. 
\begin{deftn} \textbf{\emph{(Primal learning task) }}\label{definition: primal_model}
	The primal learning task aims to find a function $P$: $\mX \rightarrow \mY$, such that the prediction $P(\bx)$ is similar to its corresponding HR image $\by$.
\end{deftn}
\begin{deftn} \textbf{\emph{(Dual learning task) }}\label{definition: dual_model}
	The dual learning task aims to find a function $P$: $\mY \rightarrow \mX$, such that the prediction of $D(\by)$ is similar to the original input LR image $\bx$.
\end{deftn}
If $P(\bx)$ were the correct HR image, then the down-sampled images $D(P(\bx))$ should be very close to the input LR images $\bx$.
In other words, the dual reconstruction of LR images is able to provide additional supervision to learn a better primal reconstruction mapping.
\error{To train the proposed model}, we construct a dual reconstruction loss which can be computed as follows:
\begin{equation}
\label{eq:dual_reconstruction}
\mathcal{L}_{\mathrm{DR}} (\bx, \by) = \ell_1 \Big( P(\bx), \by \Big) + \ell_2 \Big( D(P(\bx)), \bx \Big),
\end{equation}
where $\ell_1(\cdot)$ and $\ell_2(\cdot)$ denote the loss function for primal and dual reconstruction tasks, respectively.

\begin{figure}[t]
	\centering
	\includegraphics[width=0.9\columnwidth]{./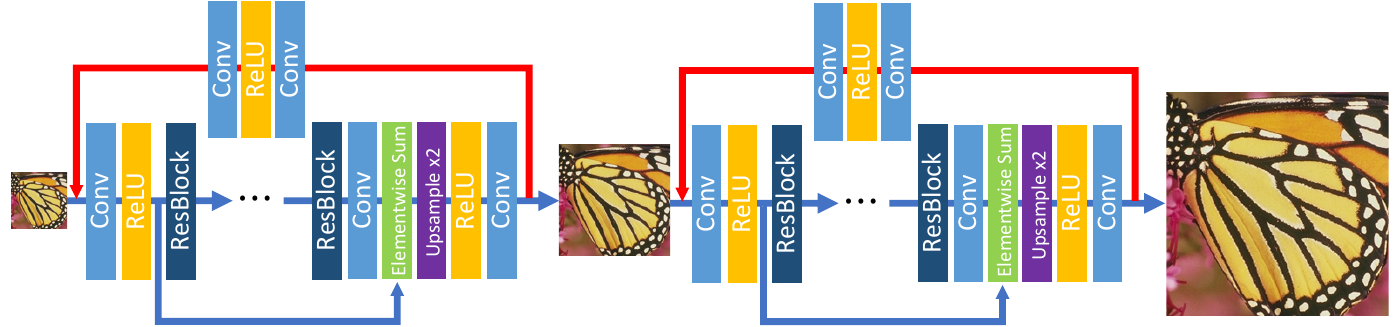}
	\caption{Demonstration of the network architectures for 4$\times$ super-resolution. The blue lines denote the standard reconstruction for the primal model and the red lines denote the backward shortcut for the dual model.}
	\label{fig:architecture}
\end{figure}

\subsection{Progressive dual reconstruction for super-resolution}

\kui{We build our network based on the proposed dual reconstruction scheme to exploit the mutual dependencies between LR and HR images, as shown in Figure~\ref{fig:architecture}.}
Following the design of progressive reconstructions~\cite{wang2015deep, lai2017deep}, the proposed model consists of multiple \emph{dual reconstruction blocks} and progressively predicts the images from low-resolution to high-resolution.
Let $r$ be the upscaling factor, the number of the blocks depends on the upscaling factor: $L = \log_2(r)$. For example, the model contains 2 blocks for 4$\times$ and 3 blocks for 8$\times$ upscaling.

The dual reconstruction scheme can be easily implemented by introducing a backward shortcut connection (see red lines in Figure~\ref{fig:architecture}).
In each block, the primal model $P$ consists of multiple residual modules~\cite{he2016deep} followed by a sub-pixel convolution layer to increase the resolution by $2\times$ upscaling.
Since the dual task aims to learn a much simpler downsampling operation compared to the primal upscaling mapping, the dual model $D$ only contains two convolution layers and a ReLU~\cite{nair2010rectified} activation layer.
During training, we use the bicubic downsampling to resize the ground truth HR image $\by$ to $\by_l$ in $l$-th block. Let $\widehat \by_{l}$ be the predicted image of $l$-th block and $\widehat \by_0 = \by_0$ be the input LR image at the lowest level. For convenience, let $\{\theta_P\}$ and $\{\theta_D\}$ be the parameters for primal and dual models at all levels.
We build a joint loss to receive the supervision at different scales:
\begin{equation}\label{eq:overall_loss}
\mathcal{L}(\widehat \by, \by; \{\theta_P\}, \{\theta_D\}) = \sum_{l=1}^{L} \ell_{\mathrm{DR}}(\widehat \by_{l-1}, \by_{l})
= \sum_{l=1}^{L} \ell_1 \Big( P_l(\widehat \by_{l-1}), \by_{l} \Big) \small{+} \ell_2 \Big( D_l(P_l(\widehat \by_{l-1})), \widehat \by_{l-1} \Big),
\end{equation}
where $P_l$ and $D_l$ denote the primal and dual model in $l$-th block, respectively. We set both $\ell_1$ and $\ell_2$ on the primal and dual reconstruction tasks to the proposed $\ell_{\mathrm{GS}}$ loss function.

\subsection{Theoretical analysis}
We theoretically analyze the generalization bound for the proposed method, where all definitions, proofs and lemmas are put in Appendix A, due to the page limitation.
The generalization error of the dual learning scheme is to measure how accurately the algorithm predicts for the unseen test data in the primal and dual tasks. In particular, we obtain a generalization bound of the proposed model using Rademacher complexity \cite{bartlett2002rademacher}.

\begin{thm} \label{theorem: generalization bound}
	Let $ \ell_1(P(\bx), \by) + \ell_2 (D(P(\bx)), \bx) $ be a mapping from $ \mX \times \mY $ to $ [0, M] $, and the hypothesis set $ \mH_{dual} $ be infinite. Then, for any $ \delta > 0 $, with probability at least $ 1-\delta $, the generalization error $E(P, D)$ (i.e., expected loss)  satisfies for all $ (P, D) \in \mH_{dual} $:
	\begin{align}
	E(P, D) \leq& \widehat{E}(P, D) + 2 R_m^{DL}(\mH_{dual}) + M \sqrt{\frac{1}{2m} \log(\frac{1}{\delta})}, \nonumber \\
	E(P, D) \leq& \widehat{E}(P, D) + 2 \widehat{R}_{\mZ}^{DL}(\mH_{dual}) + 3M \sqrt{\frac{1}{2m} \log(\frac{1}{\delta})}, \nonumber
	\end{align}
	where $m$ is the sample number and $\widehat{E}(P, D)$ is the empirical loss, while $R_m^{DL}$ and $\widehat{R}_{\mZ}^{DL}$ represent the Rademacher complexity and empirical Rademacher complexity of dual learning, respectively.
\end{thm}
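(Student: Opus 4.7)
The plan is to follow the standard Rademacher-complexity route of Bartlett--Mendelson / Mohri--Rostamizadeh--Talwalkar, adapted to the dual-learning loss class. I would first define the composite loss class
\[
\mL_{dual} = \Big\{ (\bx, \by) \mapsto \ell_1(P(\bx), \by) + \ell_2(D(P(\bx)), \bx) \;:\; (P, D) \in \mH_{dual} \Big\},
\]
whose elements take values in $[0, M]$ by assumption. With $S = \{(\bx_i, \by_i)\}_{i=1}^m$ drawn i.i.d., set $\Phi(S) = \sup_{(P,D)\in\mH_{dual}} \big( E(P,D) - \widehat{E}(P,D) \big)$. The first step is to verify the bounded-differences property: replacing one sample $(\bx_i,\by_i)$ by $(\bx_i',\by_i')$ changes $\widehat{E}(P,D)$ by at most $M/m$ uniformly, hence $\Phi$ changes by at most $M/m$. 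McDiarmid's inequality then gives
\[
\Pr\!\left[ \Phi(S) \geq \mmE_S[\Phi(S)] + t \right] \leq \exp\!\left( -\frac{2 t^2 m}{M^2} \right),
\]
which, setting the right-hand side to $\delta$, supplies the $M\sqrt{\tfrac{1}{2m}\log(1/\delta)}$ deviation term.

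The second step is the symmetrization argument to bound $\mmE_S[\Phi(S)]$. Introducing a ghost sample $S'$ and Rademacher variables $\sigma_i \in \{\pm 1\}$, the usual manipulation yields
\[
\mmE_S[\Phi(S)] \;\leq\; \mmE_{S,S'}\!\left[ \sup_{(P,D)} \tfrac{1}{m}\sum_{i=1}^m \big( \ell_{dual}(P,D; \bx_i',\by_i') - \ell_{dual}(P,D; \bx_i,\by_i) \big) \right] \;\leq\; 2 R_m^{DL}(\mH_{dual}),
\]
where $\ell_{dual}$ denotes the combined loss and $R_m^{DL}$ is exactly the Rademacher complexity of $\mL_{dual}$ under the joint $(\bx,\by)$ sample (this is where the definition of $R_m^{DL}$ stated in Appendix A is invoked). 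Combining with the McDiarmid step proves the first inequality.

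For the second inequality, the same McDiarmid argument is applied once more to $\widehat{R}_{\mZ}^{DL}(\mH_{dual})$ itself, which is also a bounded-differences functional with the same $M/m$ sensitivity. Writing the two concentration events with confidence $1-\delta/2$ each and applying a union bound converts $R_m^{DL}$ into $\widehat{R}_{\mZ}^{DL}$ while accumulating a second deviation of $M\sqrt{\tfrac{1}{2m}\log(2/\delta)}$; absorbing constants gives the stated $3M\sqrt{\tfrac{1}{2m}\log(1/\delta)}$ term.

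The only real subtlety, and what distinguishes this from the vanilla single-task bound of Xia \etal, is the composition $D \circ P$ appearing inside $\ell_2$: the Rademacher complexity $R_m^{DL}(\mH_{dual})$ must be defined over the joint hypothesis class of pairs $(P,D)$, so that the supremum in the symmetrization step ranges over both mappings simultaneously rather than each being treated in isolation. I expect the main work to lie in setting up this joint-hypothesis Rademacher complexity cleanly enough that the supremum passes through the symmetrization inequality without requiring any contraction lemma on $\ell_2$ (which would cost an extra Lipschitz factor). Once the class is defined in the product form, the bounded-differences verification and symmetrization are essentially mechanical, and the stated result follows.
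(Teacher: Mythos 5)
Your proposal is correct and follows essentially the same route as the paper, whose proof of this theorem is simply a one-line appeal to Theorem 3.1 of Mohri et al.\ rescaled to a $[0,M]$-valued loss; you have just written out that standard argument (McDiarmid with sensitivity $M/m$, symmetrization to $2R_m^{DL}$, and a second McDiarmid plus union bound to pass to $\widehat{R}_{\mZ}^{DL}$) for the composite loss class indexed by pairs $(P,D)$. The only caveat, which you already flag and which the paper's statement shares, is that the union-bound step properly yields $3M\sqrt{\log(2/\delta)/(2m)}$ rather than $3M\sqrt{\log(1/\delta)/(2m)}$.
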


This theorem suggests that using the hypothesis set with larger capacity and more samples can guarantee better generalization. We highlight that the derived generalization bound of dual learning, where the loss function is bounded by $ [0, M] $, is more general than \citep{pmlr-v70-xia17a}.
\begin{remark}
	Based on the definition of Rademacher complexity, the capacity of the hypothesis set $ \mH_{dual} \small{\in} \mP \small{\times} \mD $ is smaller than the capacity of hypothesis set $ \mH \small{\in} \mP $ or $ \mH \small{\in} \mD $ in traditional supervised learning, $ i.e. $, $ \widehat{R}_{\mZ}^{DL} \leq \widehat{R}_{\mZ}^{SL} $, where $ \widehat{R}_{\mZ}^{SL} $ is Rademacher complexity defined in supervised learning. In other words, dual reconstruction scheme has a smaller generalization bound than the primal feed-forward scheme and the proposed dual reconstruction model helps the primal model to achieve more accurate SR predictions.\footnote{Experiments on the effectiveness of dual learning scheme can be found in Section~\ref{exp:ablation_dual}.}
\end{remark}


\section{Experiments}

%

\ygyan{
	In the experiments, we perform super-resolution to recover images that are downsampled by factors of 4 and 8, respectively.
	We compare the performance of the proposed method with several state-of-the-art methods on five benchmark datasets,
}
\yong{
	including SET5~\cite{bevilacqua2012low}, SET14~\cite{zeyde2010single}, BSDS100~\cite{arbelaez2011contour}, URBAN100~\cite{huang2015single} and MANGA109~\cite{matsui2017sketch}.
	For quantitative evaluation, we adopt two common image quality metrics, i.e., \emph{PSNR} and \emph{SSIM}~\cite{wang2004image} in the paper.
}

%
%
%

\subsection{Implementation details}\label{Implementation_Details}


\ygyan{
	We train the proposed DRN model using a random subset of 350k images from the ImageNet dataset~\cite{russakovsky2015imagenet}. 
	We randomly crop the input images to $128 \times 128$ RGB images as the HR data, and downsample the HR data using bicubic kernel to obtain the LR data.
	We use ReLU activation in both the primal and dual reconstruction model.
	Each reconstruction block in the primal model has $7$ identical residual modules, i.e., 14 modules for $4\times$ and 21 modules for $8\times$ upscaling.
	We adopt the sub-pixel convolutional layer \cite{shi2016real} to increase the resolution by $2\times$ upscaling.
	The hyperparameter $\lambda$ in Eqn. (\ref{eq:gradient_masked_loss}) is set to $2$.
	During training, we apply the Adam algorithm \cite{kingma2014adam} with $\beta_1 = 0.9$.
	We set minibatch size as 16. 
	The learning rate is initialized to $10^{-5}$ and decreased by a factor of 10 for every $5\times10^{5}$ for total $10^6$ iterations. All experiments were conducted using PyTorch.
}

\begin{figure}[t]
	\centering
	\includegraphics[width = 0.95\columnwidth]{./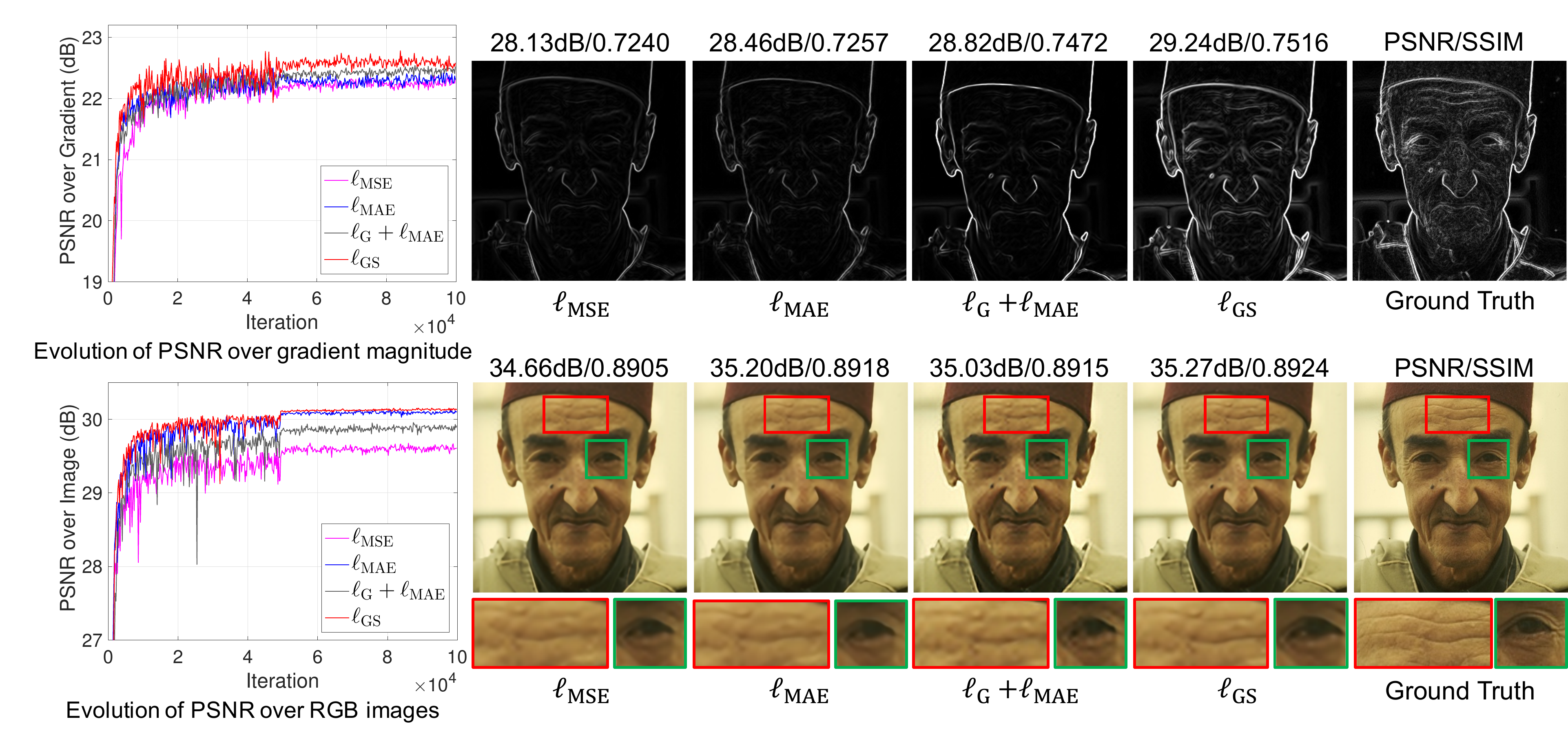}
	\caption{Performance comparison of different loss functions. \ygyan{The PSNR and SSIM values} are shown above the images. \ygyan{The top row denotes recovery results on gradient magnitude.}
	}
	\label{fig:loss_compare}
\end{figure}

\subsection{Demonstration of gradient sensitive loss}\label{Ablation_Study}

\ygyan{
	In this part, we perform super-resolution with $4 \times$ upscaling to study the impacts of different losses, 
	including the Mean Square Error (MSE), Mean Absolute Error (MAE), image gradient loss \cite{mathieu2015deep}, and the proposed gradient-sensitive (GS) loss.
	Figure \ref{fig:loss_compare} presents the results obtained by the different losses.
	The top row denotes the results regarding the image gradient,
	and the bottom row represents the results over the RGB images.
	The proposed gradient-sensitive loss converges to the highest PSNR score among all the compared losses.
	In addition, the gradient magnitude map obtained by $\ell_{\mathrm{GS}}$ is more close to the ground-truth compared with the other losses.
	From the reconstructed RGB images, we observe that $\ell_{\mathrm{GS}}$ is able to capture more details and maintain the perceptual fidelity of the original HR images.
}

%
%

\begin{table*}[tbp]
	\small
	\centering
	\caption{Performance comparison with state-of-the-art algorithms for 4$\times$ upscaling image super-resolution. \textbf{Bold} number indicates the best result and \blue{\textbf{blue}} number indicates the second best result.}
	\resizebox{0.9\textwidth}{!}{
	\begin{tabular}{c|c|c|c|c|c|c|c|c|c|c}
		\toprule
		\multicolumn{1}{c|}{\multirow{2}[0]{*}{Algorithms}} & \multicolumn{2}{c|}{SET5} & \multicolumn{2}{c|}{SET14} & \multicolumn{2}{c|}{BSDS100} & \multicolumn{2}{c|}{URBAN100} & \multicolumn{2}{c}{MANGA109} \\
		& \multicolumn{1}{l}{PSNR} & \multicolumn{1}{c|}{SSIM} & \multicolumn{1}{l}{PSNR} & \multicolumn{1}{c|}{SSIM} & \multicolumn{1}{l}{PSNR} & \multicolumn{1}{c|}{SSIM} & \multicolumn{1}{l}{PSNR} & \multicolumn{1}{c|}{SSIM} & \multicolumn{1}{l}{PSNR} & \multicolumn{1}{c}{SSIM} \\
		\midrule
		Bicubic  &   28.42    &   0.810    &    26.10   &   0.702    &   25.96    &   0.667    &   23.15    &   0.657    &   24.92    & 0.789 \\
		SRCNN~\cite{dong2016image}    &  30.49   &   0.862    &   27.61    &   0.751    &   26.91    &   0.710    &   24.53    &   0.722    &   27.66    &   0.858    \\
		SelfExSR~\cite{huang2015single}   &  30.33    &    0.861   &   27.54    &    0.751   &    26.84   &   0.710    &   24.82    &   0.737    &    27.82   &   0.865    \\
		DRCN~\cite{kim2016deeply}         &    31.53   &   0.885    &   28.04    &   0.767    &   27.24    &   0.723    &   25.14    &   0.751    &   28.97    & 0.886 \\
		ESPCN~\cite{shi2016real}        &    29.21   &   0.851    &   26.40    &   0.744    &   25.50    &   0.696    &   24.02    &   0.726    &   23.55    & 0.795 \\
		SRResNet~\cite{ledig2016photo}        &   32.05    &   0.891    &   28.49    &   0.782    &   27.61    &   0.736    &   26.09    &   0.783    &   30.70    &  0.908\\
		SRGAN~\cite{ledig2016photo}        &   29.46    &   0.838    &   26.60    &   0.718    &   25.74    &  0.666     &   24.50    &   0.736    &   27.79    & 0.856 \\
		FSRCNN~\cite{dong2016accelerating}        &   30.71    &   0.865    &   27.70    &   0.756    &   26.97    &   0.714    &   24.61    &   0.727    &   27.89    & 0.859 \\
		VDSR~\cite{kim2016accurate}    &   31.53    &   0.883    &   28.03    &   0.767    &   27.29    &   0.725    &   25.18    &   0.752    &   28.82    &    0.886  \\
		DRRN~\cite{tai2017image}         &    31.69   &   0.885    &   28.21    &    0.772   &   27.38    &   0.728    &    25.44   &   0.763    &  27.17     & 0.853 \\
		LapSRN~\cite{lai2017deep}        &    31.54   &   0.885    &   28.09    &   0.770    &    27.31   &    0.727   &   25.21    &   0.756    &   29.09    & 0.890 \\
		SRDenseNet~\cite{tong2017image}        &   32.02    &   0.893    &   28.50    &   0.778    &   27.53    &   0.733    &   26.05    &    0.781   &    29.49   & 0.899 \\
		EDSR~\cite{lim2017enhanced}   &    {\textbf{32.46}}   &     \blue{\textbf{0.896}}   &   27.71    &   \blue{\textbf{0.786}}    &   27.72    &   0.742   &   {\textbf{26.64}}    &   {\textbf{0.803}}    &   29.09    &    0.957    \\
		DBPN~\cite{DBPN2018} &    31.76   &     0.887   &   28.39    &   0.778    &   27.48    &   0.733   &   25.71    &  0.772     &   30.22    &   0.902     \\
		\hline
		GS loss (ours)   &   32.17    &   0.895    &   \blue{\textbf{28.51}}    &   0.785    &   \blue{\textbf{27.80}}    &   \blue{\textbf{0.742}}    &   25.95    &   0.789    &   \blue{\textbf{30.91}}    &  \blue{\textbf{0.959}}    \\
		DRN (ours)   &   \blue{\textbf{32.24}}    &   {\textbf{0.897}}    &   {\textbf{28.58}}    &   {\textbf{0.788}}    &   {\textbf{27.86}}    &   {\textbf{0.745}}    &   \blue{\textbf{26.12}}    &   \blue{\textbf{0.792}}    &    {\textbf{30.97}}   &    {\textbf{0.963}}  \\
		\bottomrule
	\end{tabular}
	}
	\label{exp:4xsr}
\end{table*}

\subsection{Comparisons with state-of-the-art \ygyan{methods}}\label{Comparison_with_baselines}

We compare the performance of our proposed DRN approach with several state-of-the-art methods, including
Bicubic, SRCNN~\cite{dong2016image}, SelfExSR~\cite{huang2015single}, DRCN~\cite{kim2016deeply}, ESPCN~\cite{shi2016real}, SRResNet~\cite{ledig2016photo}, SRGAN~\cite{ledig2016photo}, FSRCNN~\cite{dong2016accelerating}, VDSR~\cite{kim2016accurate}, DRRN~\cite{tai2017image}, LapSRN~\cite{lai2017deep}, SRDenseNet~\cite{tong2017image} and EDSR~\cite{lim2017enhanced}.
While preparing this paper, we are aware of a very recent work~\cite{DBPN2018} which shows promising performance. For fair comparison, we train the model using their souce code on our data set with the same setting. However, our reproduced results are worse than the reporting results in~\cite{DBPN2018}. One possible reason is that they use more training data. We study the effect of the number of training data in Figure~\ref{fig:psnr_vs_dataRatio1} of supplementary file.

\ygyan{
	Tables \ref{exp:4xsr} and \ref{exp:8xsr} present the results of $4\times$ and $8\times$ image super-resolution, respectively.
	For the $4 \times$ super-resolution tasks, our proposed GS loss and DRN approach outperform the other conducted methods on most datasets.
	For the $8 \times$ super-resolution tasks, DRN and GS loss achieve the best and the second best performance among all the conducted methods, respectively.
	These observations demonstrate the effectivenes of the proposed methods.
	In addition, DRN with GS loss outperforms GS loss on all the datasets, which validates that the proposed dual reconstruction mechanism is able to further improve the performance.
}
\qi{For further comparison, we provide visual comparisons on some reconstructed images. Figures~\ref{fig:image_compare_4x} shows the $4 \times$ and $8 \times$ SR images obtained by different methods and the corresponding metrics, respectively. We observe that our proposed DRN method consistently achieves the best numerical results and the best visual quality.}

\begin{table*}[tbp]
	\small
	\centering
	\caption{Performance comparison with state-of-the-art algorithms for 8$\times$ upscaling image super-resolution. {\textbf{Bold}} number indicates the best result and \blue{\textbf{blue}} number indicates the second best result.}
	\resizebox{0.9\textwidth}{!}{
	\begin{tabular}{c|c|c|c|c|c|c|c|c|c|c}
		\toprule
		\multicolumn{1}{c|}{\multirow{2}[0]{*}{Algorithms}} & \multicolumn{2}{c|}{SET5} & \multicolumn{2}{c|}{SET14} & \multicolumn{2}{c|}{BSDS100} & \multicolumn{2}{c|}{URBAN100} & \multicolumn{2}{c}{MANGA109} \\
		& \multicolumn{1}{l}{PSNR} & \multicolumn{1}{c|}{SSIM} & \multicolumn{1}{l}{PSNR} & \multicolumn{1}{c|}{SSIM} & \multicolumn{1}{l}{PSNR} & \multicolumn{1}{c|}{SSIM} & \multicolumn{1}{l}{PSNR} & \multicolumn{1}{c|}{SSIM} & \multicolumn{1}{l}{PSNR} & \multicolumn{1}{c}{SSIM} \\
		\midrule
		Bicubic  &   24.39    &   0.657    &    23.19   &   0.568    &   23.67    &   0.547    &   20.74    &   0.515    &   21.47    & 0.649 \\
		SRCNN~\cite{dong2016image}    &  25.33   &   0.689    &   23.85    &   0.593    &   24.13    &   0.565    &   21.29    &   0.543    &   22.37    &   0.682    \\
		SelfExSR~\cite{huang2015single}   &  25.52    &    0.704   &   24.02    &    0.603   &    24.18   &   0.568    &   21.81    &   0.576    &    22.99   &   0.718    \\
		ESPCN~\cite{shi2016real}        &   25.02    &    0.697   &   23.45    &   0.598    &   23.92    &   0.574    &   21.20    &   0.554    &  22.04     &  0.683  \\
		SRResNet~\cite{ledig2016photo}        &   26.62    &   0.756    &   24.55    &   0.624    &   24.65    &   0.587    &   22.05    &   0.589    &   23.88    & 0.748 \\
		SRGAN~\cite{ledig2016photo}        &   23.04    &   0.626    &    21.57   &   0.495    &   21.78    &   0.442    &  19.64     &   0.468    &    20.42   &  0.625 \\
		FSRCNN~\cite{dong2016accelerating}        &   25.41    &   0.682    &   23.93    &   0.592    &   24.21    &   0.567    &   21.32    &   0.537    &   22.39    & 0.672 \\
		VDSR~\cite{kim2016accurate}    &   25.72    &   0.711    &   24.21    &   0.609    &   24.37    &   0.576    &   21.54    &   0.560    &   22.83    &    0.707  \\
		DRRN~\cite{tai2017image}        &    25.76   &   0.721    &   24.21    &   0.583    &   24.47   &   0.533    &   21.02    &  0.530    &   21.88   & 0.663 \\
		LapSRN~\cite{lai2017deep}        &    26.14   &   0.737    &   24.35    &   0.620    &    24.54   &    0.585   &   21.81    &   0.580    &   23.39    & 0.734 \\
		SRDenseNet~\cite{tong2017image}        &   25.99    &   0.704    &   24.23    &   0.581    &   24.45   &    0.530   &    21.67   &   0.562   &   23.09   & 0.712 \\
		EDSR~\cite{lim2017enhanced}        &    26.54   &   0.752    &    24.54   &   0.625    &   24.59   &   0.588    &   22.07    &   0.595   &   23.74   & 0.749 \\
		DBPN~\cite{DBPN2018} &    26.43   &     0.748   &   24.39    &   0.623    &   24.60    &   0.589   &   22.01    &   0.592    &   23.97    &   0.756     \\
		\hline
		GS loss (\ygyan{ours})   &   \blue{\textbf{26.91}}    &   \blue{\textbf{0.772}}    &   \blue{\textbf{24.73}}    &   \blue{\textbf{0.636}}    &   \blue{\textbf{24.70}}    &  \blue{\textbf{0.593}}     &   \blue{\textbf{22.30}}    &   \blue{\textbf{0.609}}    &   \blue{\textbf{24.77}}    &   \blue{\textbf{0.782}}   \\
		DRN (ours)   &   {\textbf{27.03}}    &   {\textbf{0.775}}    &   {\textbf{24.86}}    &   {\textbf{0.641}}    &   {\textbf{24.83}}    &   {\textbf{0.599}}    &   {\textbf{22.46}}    &   {\textbf{0.617}}    &    {\textbf{24.85}}   &    {\textbf{0.790}}  \\
		\bottomrule
	\end{tabular}
	}
	\label{exp:8xsr}
\end{table*}

\begin{figure}[t]
	\centering
	\includegraphics[width = 0.75\columnwidth]{./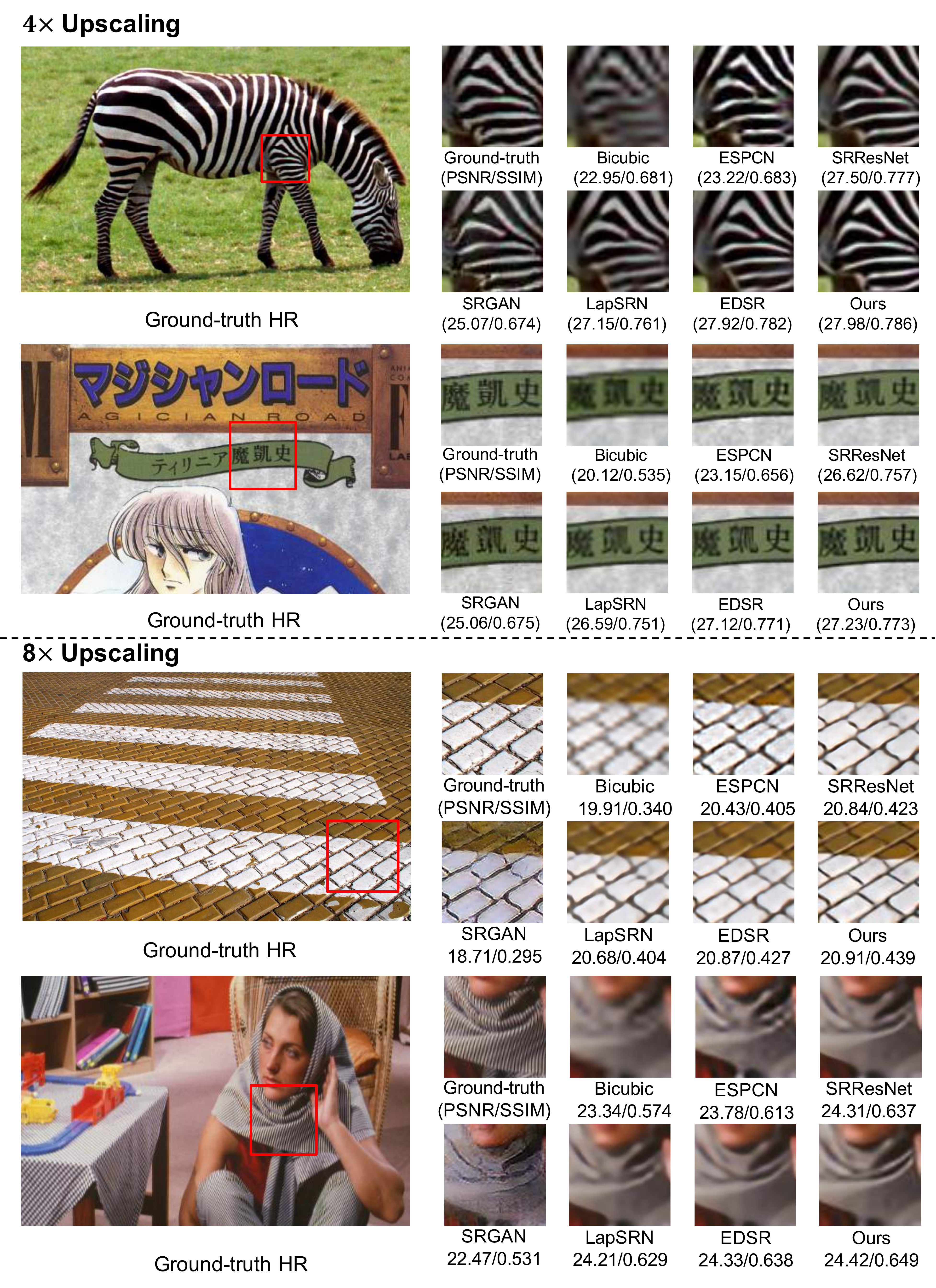}
	\caption{Visual comparison for $4\times$ and $8\times$ image super-resolution on benchmark datasets.}
	\label{fig:image_compare_4x}
\end{figure}


\begin{table}[hbp]
	\small
	\centering
	\caption{Performance comparison over image gradient in terms of PSNR.  [8$\times$ upscaling]}
	\begin{tabular}{c|c|c|c|c|c}
		\toprule
		\multicolumn{1}{c|}{Algorithms}   & \multicolumn{1}{c|}{Set5} & \multicolumn{1}{c|}{Set14} & \multicolumn{1}{c|}{BSDS100} & \multicolumn{1}{c|}{Urban100} & \multicolumn{1}{c}{Manga109} \\
		\midrule
		SRGAN~\cite{ledig2016photo} & 20.01    & 19.48 & 19.59 & 18.58 & 19.54 \\
		SRResNet~\cite{ledig2016photo} & 20.82 & 19.65 & 20.50  & 18.87 & 20.43 \\
		LapSRN~\cite{lai2017deep} & 20.14 & 19.29 & 20.36 & 18.48 & 18.96 \\
		EDSR~\cite{lim2017enhanced}  & 19.84 & 19.31 & 19.98 & 18.39 & 20.16 \\
		DBPN~\cite{DBPN2018}  & 20.93 & 19.76 & 20.47 & 18.87 & 20.50 \\
		DRN (Ours)  & \textbf{21.29} & \textbf{20.04} & \textbf{20.62} & \textbf{19.11} & \textbf{20.68} \\
		\bottomrule
	\end{tabular}%
	\label{tab:psnr_gradient}
\end{table}

\begin{table}[hbp]
	\small
	\centering
	\caption{Abalation study of dual reconstruction scheme and progressive structure. We report \ygyan{the} PSNR \ygyan{scores} on \ygyan{the} SET5 and SET14 datasets.}
	\begin{tabular}{c|c|c|c|c}
		\toprule
		Method & Plain & Dual & Progressive & Dual + Progressive \\
		\midrule
		SET5 & 31.96 & 32.04 & 32.17 & \textbf{32.24} \\
		SET14 & 28.37 & 28.47 & 28.52 & \textbf{28.58} \\
		\bottomrule
	\end{tabular}%
	\label{exp:dual}%
\end{table}%

\section{More results and discussions}

\subsection{Comparisons of PSNR over image gradient}
\ygyan{
	Table~\ref{tab:psnr_gradient} lists the PSNR scores over the image gradient on several benchmark datasets. Our proposed DRN achieves the best performance, which demonstrates that the DRN network has a better ability to capture the structural information compared with the other methods.
}

\subsection{Effects of dual reconstruction scheme and progressive structure.}\label{exp:ablation_dual}
\ygyan{
	In this experiment, we evaluate the effects of the dual reconstruction scheme and conduct analysis on the progressive structure.
	The ``non-progressive'' methods directly predict the final HR images without the supervision from the prediction of intermediate images.
	The ``non-dual'' learning methods remove the dual learning part and fall back to plain feed-forward methods.
	Table \ref{exp:dual} shows the PSNR scores of the $4 \times$ super-resolution tasks on the SET5 and SET14 datasets.
	We observe that both the progressive and dual methods outperform the plain methods (``non-progressive'' and ``non-dual'')
	The combination of dual reconstruction scheme and progressive structure method achieves the best performance. 
	These results demonstrate the efficacy of the proposed progressive reconstruction and dual learning approaches. 
}

\section{Conclusion}
In this work, we propose a novel gradient sensitive loss (GS) to capture both low-frequency content and high-frequency structure for image super-resolution. Moreover, to exploit the mutual dependencies between LR and HR images, we propose a dual reconstruction to further improve the performance. Our model is trained with the proposed GS loss in a progressive coarse-to-fine manner.
More critically, we conduct theoritical analysis on the generalization bound of the proposed method. Extensive experiments demonstrate that the proposed method produces perceptually sharper images and significantly outperforms the state-of-the-art SR methods with a large upscaling factor of $4\times$ and $8\times$.

\clearpage
{
	\bibliographystyle{abbrv}

}

\newpage
\appendix

\begin{center}
{
	\Large{\textbf{Supplementary Materials for ``Dual Reconstruction Nets for Image Super-Resolution with Gradient Sensitive Loss''}}
}
\end{center}

\section{Theoretical analysis}\label{sec:proof}
In this section, we will analyze the generalization bound for the proposed method.
The generalization error of the dual learning scheme is to measure how accurately the algorithm predicts for the unseen test data in the primal and dual tasks. Firstly, we will introduce the definition of the generalization error as follows:
\begin{deftn}
	Given an underlying distribution $ \mS $ and hypotheses $ P \in \mP $ and $ D \in \mD $ for the primal and dual tasks, where $ \mP = \{ P_{\theta_{\bx\by}}(\bx); \theta_{\bx\by} \in \Theta_{\bx\by} \} $ and $ \mD = \{ D_{\theta_{\by\bx}}(\by); \theta_{\by\bx} \in \Theta_{\by\bx} \}$, and $ \Theta_{\bx\by} $ and $ \Theta_{\by\bx} $ are parameter spaces, respectively, the generalization error (expected loss) of h is defined by:
	\begin{align*}
		E(P, D) = \mmE_{(\bx, \by) \sim \mP} \left[ \ell_1(P(\bx), \by) + \ell_2 (D(P(\bx)), \bx) \right], \; \forall P \in \mP, D \in \mD.
	\end{align*}
\end{deftn} 

In practice, the goal of the dual learning is to optimize the bi-directional tasks. For any $ P \in \mP $ and $ D \in \mD $, we define the empirical loss on the $ m $ samples as follows:
\begin{align}
	\widehat{E}(P, D) = \frac{1}{m} \sum_{i=1}^{m} \ell_1(P(\bx_i), \by_i) + \ell_2 (D(P(\bx_i)), \bx_i)
\end{align}
Following \cite{bartlett2002rademacher}, we define Rademacher complexity for dual learning in this paper.
We define the hypothesis set as $ \mH_{dual} \in \mP \times \mD $, this Rademacher complexity can measure the complexity of the hypothesis set, that is it can capture the richness of a family of the primal and the dual models.
For our application, we mildly rewrite the definition of Rademacher complexity in  \cite{mohri2012foundations} as follows:
\begin{deftn} \textbf{\emph{(Rademacher complexity of dual learning) }} \label{Def: Rademacher Complexity}
	Given an underlying distribution $ \mS $, and its empirical distribution $ \mZ = \{\bz_1, \bz_2, \cdots,\bz_m\} $, where $ \bz_i = (\bx_i, \by_i) $, then the Rademacher complexity of dual learning is defined as:
	\begin{align*}
		R_m^{DL} (\mH_{dual}) = \mmE_{\mZ} \left[ \widehat{R}_{\mZ} (P, D) \right], \; \forall P \in \mP, D \in \mD,
	\end{align*}
	where $ \widehat{R}_{\mZ} (P, D) $ is its empirical Rademacher complexity defined as:
	\begin{align*}
		\widehat{R}_{\mZ} (P, D) = \mmE_{\sigma} \left[ \sup_{(P, D) \in \mH_{dual}}  \frac{1}{m} \sum_{i=1}^{m} \sigma_i (\ell_1(P(\bx_i), \by_i) + \ell_2 (D(P(\bx_i)), \bx_i))  \right].
	\end{align*}
	where $ \sigma = \{ \sigma_1, \sigma_2, \cdots, \sigma_m \} $ are independent uniform $ \{\pm 1\} $-valued random variables with $ p(\sigma_i = 1) = p(\sigma_i = -1) = \frac{1}{2} $.
\end{deftn}


\subsection{Generalization bound}
This subsection give a generalization guarantees for the dual learning problem. We start with a simple case of a finite hypothesis set.
\begin{thm}
	Let $ [\ell_1(P(\bx), \by) + \ell_2 (D(P(\bx)), \bx)] $ be a mapping from $ \mX \times \mY $ to $ [0, M] $, and suppose the hypothesis set $ \mH_{dual} $ is finite, then for any $ \delta > 0 $, with probability at least $ 1-\delta $, the following inequality holds for all $ (P, D) \in \mH_{dual} $:
	\begin{align*}
		E(P, D) \leq \widehat{E}(P, D) + M \sqrt{\frac{\log|\mH_{dual}| + \log \frac{1}{\delta}}{2m}}.
	\end{align*}
\end{thm}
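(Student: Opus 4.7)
The plan is to follow the classical finite-hypothesis-class generalization argument based on Hoeffding's inequality combined with a union bound. The structure is completely analogous to the standard supervised learning bound in \cite{mohri2012foundations}; the only thing I need to check is that the dual-learning loss $\ell_1(P(\bx), \by) + \ell_2(D(P(\bx)), \bx)$ inherits the bounded-range hypothesis that makes Hoeffding applicable, and the theorem statement already guarantees this by assuming the combined loss maps into $[0, M]$.

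First, I would fix a single pair $(P, D) \in \mH_{dual}$ and define, for each i.i.d.\ sample $\bz_i = (\bx_i, \by_i) \sim \mS$, the random variable $U_i(P,D) = \ell_1(P(\bx_i), \by_i) + \ell_2(D(P(\bx_i)), \bx_i)$. Because $U_i \in [0, M]$ almost surely and $\mmE[U_i] = E(P,D)$, and because $\widehat{E}(P,D) = \tfrac{1}{m} \sum_{i=1}^m U_i$ is their empirical mean, Hoeffding's inequality yields, for any $\epsilon > 0$,
\begin{equation*}
\Pr\bigl( E(P,D) - \widehat{E}(P,D) \geq \epsilon \bigr) \leq \exp\!\left(-\frac{2 m \epsilon^2}{M^2}\right).
\end{equation*}

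Second, I would lift this from a single $(P,D)$ to a uniform statement over all of $\mH_{dual}$ by a union bound. Since $|\mH_{dual}|$ is finite,
\begin{equation*}
\Pr\bigl(\exists (P,D) \in \mH_{dual} \;:\; E(P,D) - \widehat{E}(P,D) \geq \epsilon\bigr) \leq |\mH_{dual}| \exp\!\left(-\frac{2 m \epsilon^2}{M^2}\right).
\end{equation*}
Setting the right-hand side equal to $\delta$ and solving for $\epsilon$ gives $\epsilon = M \sqrt{(\log|\mH_{dual}| + \log(1/\delta))/(2m)}$, which, after taking complements, yields the claimed bound with probability at least $1-\delta$ simultaneously for every $(P,D) \in \mH_{dual}$.

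There is essentially no technical obstacle here; the main conceptual point worth emphasizing is that once the combined primal-plus-dual loss is treated as a single scalar functional bounded in $[0,M]$, the dual learning setting reduces to ordinary bounded-loss supervised learning over the product hypothesis set $\mH_{dual} \subseteq \mP \times \mD$, and the standard Hoeffding-plus-union-bound machinery applies verbatim. This also makes clear why the bound depends on $\log|\mH_{dual}|$ rather than $\log|\mP| + \log|\mD|$, which is the key input to the remark following Theorem~\ref{theorem: generalization bound} on the tightened capacity of dual learning.
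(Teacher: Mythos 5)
Your proof is correct and follows exactly the same route as the paper's: Hoeffding's inequality applied to the combined loss bounded in $[0,M]$, a union bound over the finite set $\mH_{dual}$, and then setting $|\mH_{dual}| e^{-2m\epsilon^2/M^2} = \delta$ to solve for $\epsilon$. The only difference is that you spell out the reduction to a single bounded scalar loss more explicitly, which is a helpful clarification but not a different argument.
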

\begin{proof}
	Based on Hoeffding's inequality, since $ [\ell_1(P(\bx), \by) + \ell_2 (D(P(\bx)), \bx)] $ is bounded in $ [0, M] $, for any $ (P, D) \in \mH_{dual} $, then
	\begin{align*}
		P\left[ E(P, D) - \widehat{E}(P, D) > \epsilon \right] \leq e^{-\frac{2m\epsilon^2}{M^2}}
	\end{align*}
	Based on the union bound, we have
	\begin{align*}
		&P\left[ \exists (P, D) \in \mH_{dual}: E(P, D) - \widehat{E}(P, D) > \epsilon \right] \\
		\leq& \sum_{(P, D) \in \mH_{dual}} P \left[ E(P, D) - \widehat{E}(P, D) > \epsilon \right] \\
		\leq& |\mH_{dual}| e^{-\frac{2m\epsilon^2}{M^2}}.
	\end{align*}
	Let $ |\mH_{dual}| e^{-\frac{2m\epsilon^2}{M^2}} = \delta $, we have $ \epsilon = M \sqrt{\frac{\log|\mH_{dual}| + \log \frac{1}{\delta}}{2m}} $ and conclude the theorem.
\end{proof}
This theorem shows that a larger sample size $ m $ and smaller hypothesis set can guarantee the generalization.
Next we will give a generalization bound of a general case of infinite hypothesis sets using Rademacher complexity.	

\begin{thm} \label{theorem: generalization bound}
	Let $ \ell_1(P(\bx), \by) + \ell_2 (D(P(\bx)), \bx) $ be a mapping from $ \mX \times \mY $ to $ [0, M] $, then for any $ \delta > 0 $, with probability at least $ 1-\delta $, the following inequality holds for all $ (P, D) \in \mH_{dual} $:
	\begin{align}
		E(P, D) \leq& \widehat{E}(P, D) + 2 R_m^{DL} + M \sqrt{\frac{1}{2m} \log(\frac{1}{\delta})}\\
		E(P, D) \leq& \widehat{E}(P, D) + 2 \widehat{R}_{\mZ}^{DL} + 3M \sqrt{\frac{1}{2m} \log(\frac{1}{\delta})}.
	\end{align}
\end{thm}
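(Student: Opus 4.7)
The plan is to follow the classical Rademacher-complexity route (as in \cite{mohri2012foundations}) applied to the composite dual-learning loss $ \phi_{P,D}(\bz) := \ell_1(P(\bx), \by) + \ell_2(D(P(\bx)), \bx) $, which by assumption takes values in $ [0, M] $. Since the whole argument only uses that $(P,D)$ is a single element of the product hypothesis class $\mH_{dual}$ and that $\phi_{P,D}$ is bounded, the proof differs from the standard supervised case only in how we interpret the hypothesis and the loss.

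First I would consider the random variable
\begin{equation*}
\Phi(\mZ) = \sup_{(P, D) \in \mH_{dual}} \left[ E(P, D) - \widehat{E}(P, D) \right].
\end{equation*}
Because $\phi_{P,D}$ is bounded in $[0, M]$, replacing any one sample $\bz_i$ by an independent copy $\bz_i'$ changes $\widehat{E}(P, D)$ by at most $M/m$ uniformly in $(P, D)$, and therefore changes the supremum $\Phi(\mZ)$ by at most $M/m$. McDiarmid's inequality then gives, with probability at least $1 - \delta$,
\begin{equation*}
\Phi(\mZ) \leq \mmE_{\mZ}[\Phi(\mZ)] + M \sqrt{\frac{1}{2m} \log(\tfrac{1}{\delta})}.
\end{equation*}

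Next I would bound $\mmE_{\mZ}[\Phi(\mZ)]$ by the standard symmetrization trick: introduce an i.i.d.\ ghost sample $\mZ' = \{\bz_1', \ldots, \bz_m'\}$, rewrite $E(P,D) = \mmE_{\mZ'}[\widehat{E}'(P, D)]$, push the expectation inside the supremum via Jensen, and then insert Rademacher variables $\sigma_i$ using the exchangeability of $\bz_i$ and $\bz_i'$. Splitting the resulting supremum of a difference by the triangle inequality yields
\begin{equation*}
\mmE_{\mZ}[\Phi(\mZ)] \leq 2\, \mmE_{\mZ, \sigma}\!\left[ \sup_{(P, D) \in \mH_{dual}} \frac{1}{m} \sum_{i=1}^{m} \sigma_i\, \phi_{P, D}(\bz_i) \right] = 2 R_m^{DL}(\mH_{dual}),
\end{equation*}
which combined with the McDiarmid estimate above establishes the first inequality.

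For the second inequality, I would apply McDiarmid a second time to the empirical Rademacher complexity $\widehat{R}_{\mZ}^{DL}(\mH_{dual})$ viewed as a function of $\mZ$: since each term $\sigma_i \phi_{P,D}(\bz_i)/m$ is bounded in absolute value by $M/m$, swapping one sample shifts $\widehat{R}_{\mZ}^{DL}$ by at most $M/m$, so with probability at least $1-\delta/2$,
\begin{equation*}
R_m^{DL}(\mH_{dual}) \leq \widehat{R}_{\mZ}^{DL}(\mH_{dual}) + M \sqrt{\frac{1}{2m} \log(\tfrac{2}{\delta})}.
\end{equation*}
Running the first argument with confidence $1-\delta/2$ as well and taking a union bound, the two $M\sqrt{\log(2/\delta)/(2m)}$ terms combine (after the usual $\log(2/\delta) \leq 2\log(1/\delta)$-type bookkeeping) into the stated $3M\sqrt{\log(1/\delta)/(2m)}$ slack.

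The main obstacle is not technical difficulty but rather a careful bookkeeping issue: I must treat $(P, D)$ as a single element of the \emph{product} class $\mH_{dual} = \mP \times \mD$ throughout, so that the supremum appearing after symmetrization is exactly the one defining $R_m^{DL}$ in Definition~\ref{Def: Rademacher Complexity}. Once this identification is fixed, the boundedness of $\phi_{P,D}$ in $[0, M]$ makes both the McDiarmid increments and the symmetrization step go through verbatim, and no structural property of $\ell_1, \ell_2$ beyond boundedness is needed.
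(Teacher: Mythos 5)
Your proposal is correct and follows essentially the same route as the paper: the paper's own ``proof'' is a one-line citation to Theorem~3.1 of Mohri et al., and what you have written out (McDiarmid on $\sup_{(P,D)}\bigl[E(P,D)-\widehat{E}(P,D)\bigr]$ with increments $M/m$, symmetrization to $2R_m^{DL}$, and a second McDiarmid plus union bound to pass to $\widehat{R}_{\mZ}^{DL}$) is exactly the proof of that theorem rescaled from $[0,1]$ to $[0,M]$. The only quibble is in the final bookkeeping: the union-bound argument yields $3M\sqrt{\log(2/\delta)/(2m)}$ (as in Mohri's statement), and your appeal to $\log(2/\delta)\le 2\log(1/\delta)$ does not actually recover the constant $3$ in front of $\sqrt{\log(1/\delta)/(2m)}$ --- but this mismatch is inherited from the theorem statement itself rather than a flaw in your argument.
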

\begin{proof}
	Based on Theorem 3.1 in \citep{mohri2012foundations}, we extend a case for $ \ell_1(P(\bx), \by) + \ell_2 (D(P(\bx)), \bx) $ bounded in $ [0, M] $.
\end{proof}
Theorem \ref{theorem: generalization bound} shows that with probability at least $ 1-\delta $, the generalization error is smaller than $ 2 R_m^{DL} + M \sqrt{\frac{1}{2m} \log(\frac{1}{\delta})} $ or $ 2 \widehat{R}_{\mZ}^{DL} + 3M \sqrt{\frac{1}{2m} \log(\frac{1}{\delta})} $.
It suggests that using the hypothesis set with larger capacity and more samples can guarantee better generalization.
Moreover, the generalization bound of dual learning is more general for the case that the loss function $ \ell_1(P(\bx), \by) + \ell_2 (D(P(\bx)), \bx) $ is bounded by $ [0, M] $, which is different from \citep{pmlr-v70-xia17a}.

\begin{remark}
	Based on the definition of Rademacher complexity, the capacity of the hypothesis set $ \mH_{dual} \small{\in} \mP \small{\times} \mD $ is smaller than the capacity of hypothesis set $ \mH \small{\in} \mP $ or $ \mH \small{\in} \mD $ in traditional supervised learning, $ i.e. $, $ \widehat{R}_{\mZ}^{DL} \leq \widehat{R}_{\mZ}^{SL} $, where $ \widehat{R}_{\mZ}^{SL} $ is Rademacher complexity defined in supervised learning. In other words, dual learning has a smaller generalization bound than supervised learning and the proposed dual reconstruction model helps the primal model to achieve more accurate SR predictions.
\end{remark}

\section{Discussions}

\subsection{Demonstration of gradient sensitive loss}

\qi{For better understanding, we plot some results about our manipulation of image gradient in Figure~\ref{fig:gradient_sensitive}. By visualizing the gradient $\nabla \mathbf{I}$ of the image $\mathbf{I}$, we can observe the structure information directly. Meanwhile, $\mathbf{M}\odot\mathbf{I}$ means the mask $\mathbf{M}$ has the pixel-wise multiplication with image $\mathbf{I}$ and $(\mathbf{1}-\mathbf{M})\odot\mathbf{I}$ represent the rest part. In our proposed method, we use the gradient of $\mathbf{M}\odot\mathbf{I}$, \textit{i.e.}  $\nabla(\mathbf{M}\odot\mathbf{I})$, to compute the loss function.}

\begin{figure}[htbp]
	\centering
	\includegraphics[trim = 0mm 15mm 0mm 0mm,
			clip,width = 1\columnwidth]{./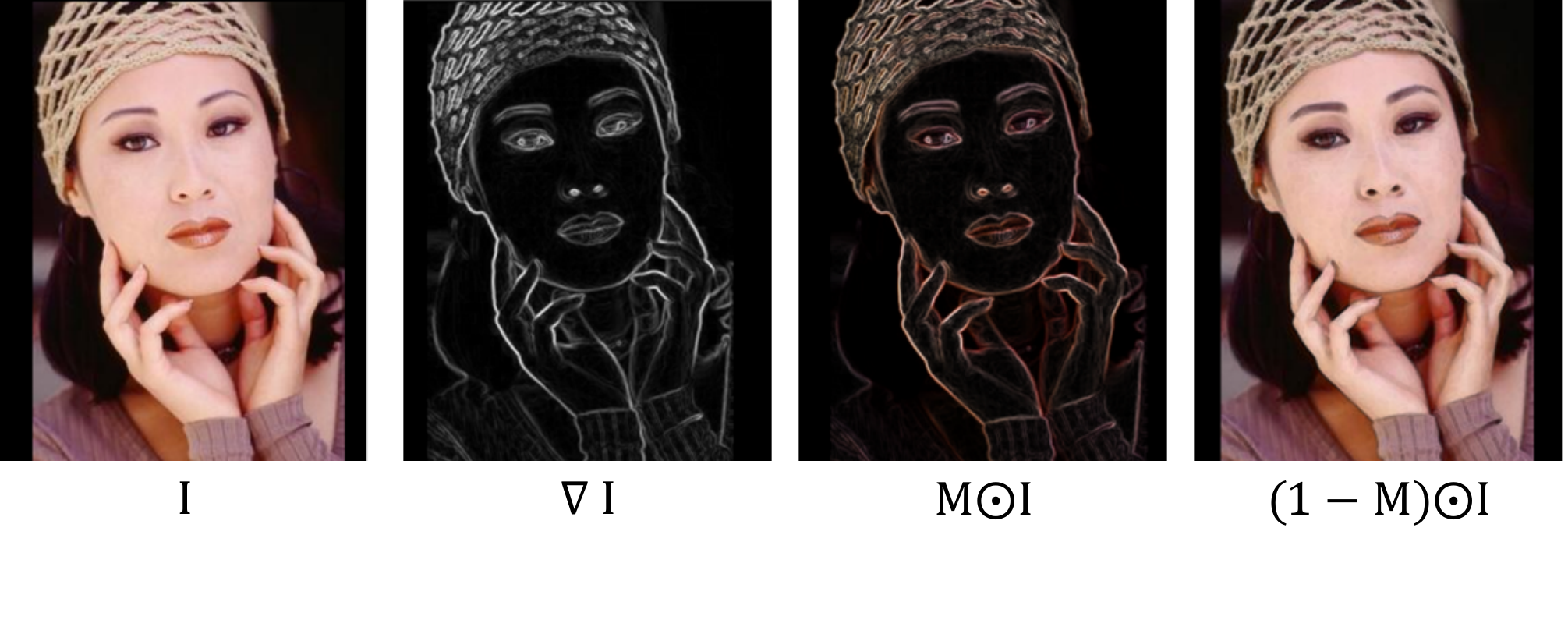}
	\caption{Demonstration of the mask $\bM$.}
	\label{fig:gradient_sensitive}
\end{figure}

\subsection{Comparisons of different loss functions and training schemes}


Table \ref{tab:loss_compare} summarizes the characteristics of different loss functions and training schemes.
In particular, the standard training scheme only forces the model to match the HR images, while the dual scheme receives the supervised information from both LR and HR images.

For the different objective functions, the perceptual loss does not obtain frequency information from images, the gradient loss only captures the high-frequency information, and the MAE, MSE and adversarial loss only btains the low-frequency information. In comparison, our proposed gradient-sensitive loss ($\ell_{\mathrm{GS}}$) is able to capture both the low- and high-frequency information from images.

Overall, the proposed DRN method with the dual scheme is able to exploit both the low- and high-frequency information, and receive supervision form both LR and HR images.

\begin{table}[htbp]
	\centering
	\caption{Comparisons of different objectives and training schemes in Super-Resolution. $\checkmark$ denotes YES, while blank denotes NO.}
	\begin{tabular}{c|c|c|c|c|c}
		\hline
		\multirow{2}[0]{*}{Schemes} & \multicolumn{1}{c|}{\multirow{2}[0]{*}{Methods}} & \multicolumn{2}{c|}{Supervision} &
		\multicolumn{2}{c}{Information} \\
		\cline{3-6}
		&       & \multicolumn{1}{c|}{LR images} & \multicolumn{1}{c|}{HR images} & \multicolumn{1}{c|}{low-frequency} & \multicolumn{1}{l}{high-frequency} \\
		\hline
		\multirow{5}[5]{*}{Standard} & MAE  &       &   \checkmark    &   \checkmark    &  \\
		& MSE &       &   \checkmark    &   \checkmark    &  \\
		& Gradient loss &       &   \checkmark    &       &  \checkmark \\
		& Adversarial loss  &       &    \checkmark   &   \checkmark    &  \\
		& Percputal loss &       &   \checkmark    &       &  \\
		& $\ell_{\mathrm{GS}}$ loss (ours)  &       &   \checkmark    &    \checkmark   & \checkmark \\
		\hline
		\multirow{1}[0]{*}{Dual}
		& $\ell_{\mathrm{GS}}$ loss (ours)  &   \checkmark    &   \checkmark    &   \checkmark    & \checkmark \\
		\hline
	\end{tabular}%
	\label{tab:loss_compare}%
\end{table}%

\section{Experimental results}


\subsection{Effect of $\lambda$ in Eqn.~(\ref{eq:gradient_masked_loss})}

In this experiment, we study the performance of our proposed DRN method under different values of the parameter $\lambda$. From Table \ref{tab:addlabel11}, when the parameter $\lambda$ is too small, the method cannot achieve promising performance, since the gradient-level loss only captures the structural information. When the parameter $\lambda$ increases monotonically, the performance of DRN increases gradually. This demonstrates that the combination of the gradient-level loss and the pixel-level loss is effective to achieve promising results. In our setting, we empirically set $\lambda = 2$, since we find that a larger value usually does not bring further performance improvement.

\begin{table}[htbp]
	\centering
	\ygyan{
		\caption{Performance w.r.t. different values of $\lambda$.}
		\begin{tabular}{c||c|c|c|c|c|c}
			\hline
			$\lambda$ & 0.1 & 0.5 & 1.0 & 1.5 & 2.0 & 5.0 \\
			\hline
			PSNR & 30.44 & 31.55 & 31.92 & 32.13 & 32.24 & 32.24 \\
			\hline
		\end{tabular}%
		\label{tab:addlabel11}%
	}
\end{table}%

%

\subsection{Effect of training data ratio}

We conduct an experiment on SET5 to evaluate the influence of the number of training data. From Figure~\ref{fig:psnr_vs_dataRatio1}, when increasing the ratio of training data on the whole dataset, the values of PSNR score increases gradually. In addition, DRN consistently outperforms DBPN on all the data ratios.

\begin{figure}[htbp]
	\centering
	\includegraphics[width = 0.47\columnwidth]{./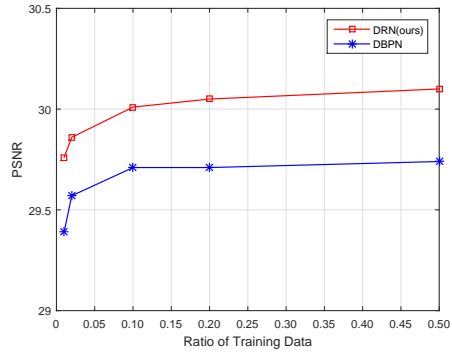}
	\caption{The results in different magnitude of ImageNet dataset.}
	\label{fig:psnr_vs_dataRatio1}
\end{figure}

\subsection{Comparison of model complexity}

We report the PSNR scores and the numbers of the parameters in DRN and several state-of-the-art models on $4\times$ and $8\times$ SR in Figures \ref{fig:performance_vs_parameters_4x} and~\ref{fig:performance_vs_parameters_8x}, respectively. The x-axis represents the number of the model parameters, and the y-axis means the value of PSNR. The results show that the proposed DRN method can achieve the best performance on both two datasets with the lowest computational complexity compared with the other baseline methods.

\begin{figure}[htbp]
	\centering
	\includegraphics[width = 0.7\columnwidth]{./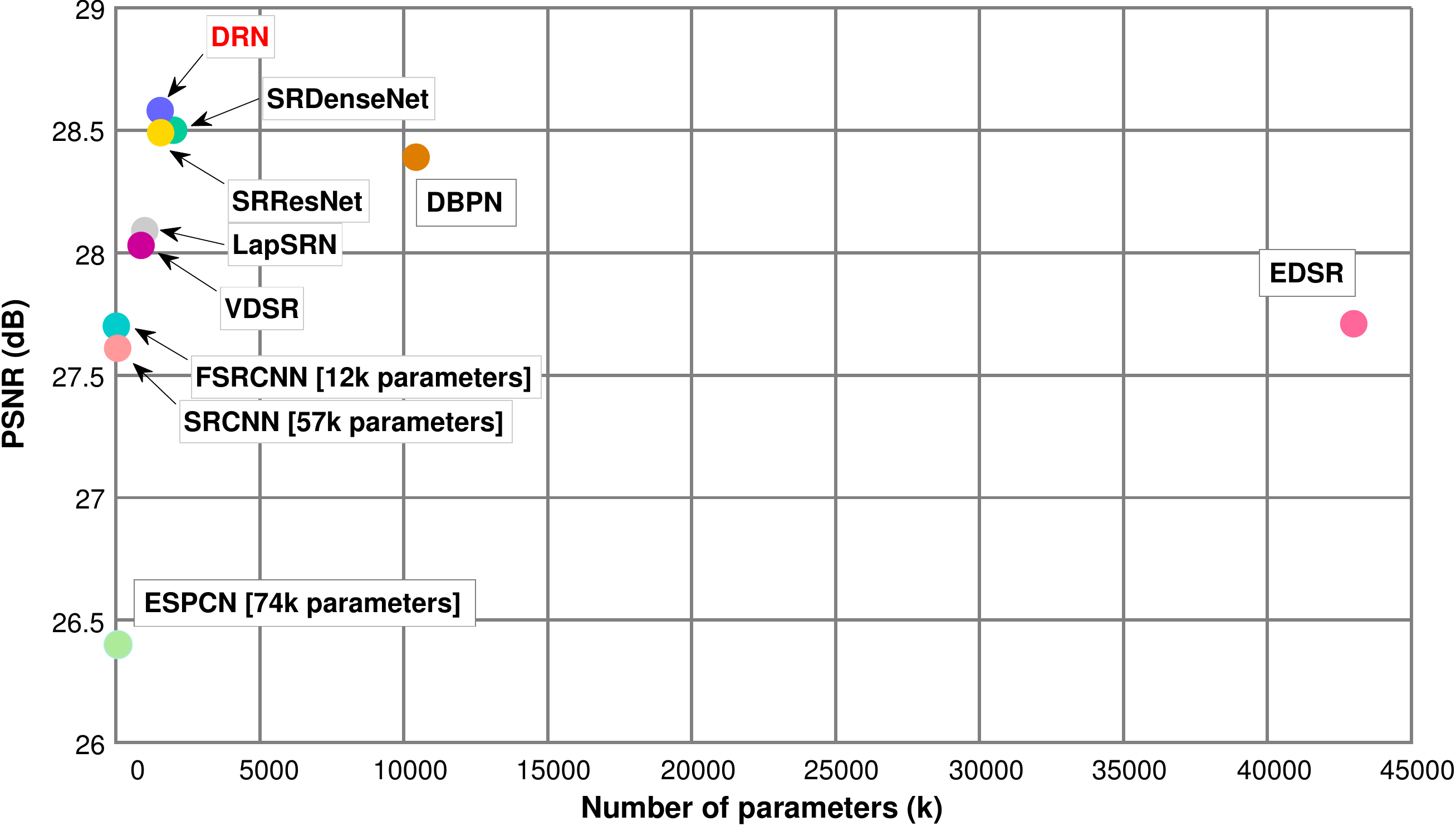}
	\caption{\qi{The results for $4\times$ SR on SET14 dataset.}}
	\label{fig:performance_vs_parameters_4x}
\end{figure}

\begin{figure}[htbp]
	\centering
	\includegraphics[width = 0.7\columnwidth]{./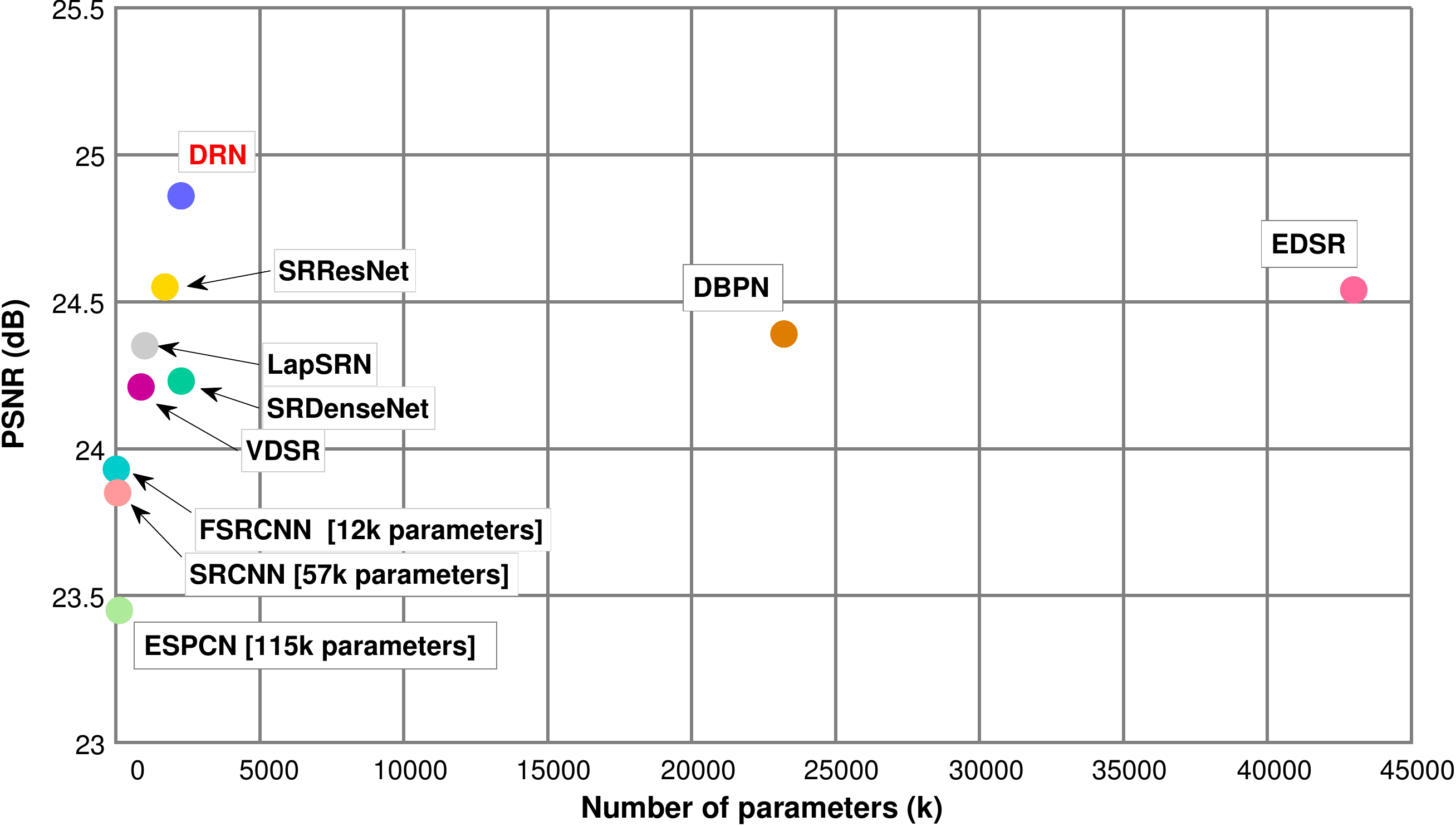}
	\caption{\qi{The results for $8\times$ SR on SET14 dataset.}}
	\label{fig:performance_vs_parameters_8x}
\end{figure}

\subsection{More results}
\qi{For further comparison, we provide the experimental results compared with the baseline methods for $8\times$ SR on several benchmark datasets.}

\begin{figure}[htbp]
	\centering
	\includegraphics[width = 0.9\columnwidth]{./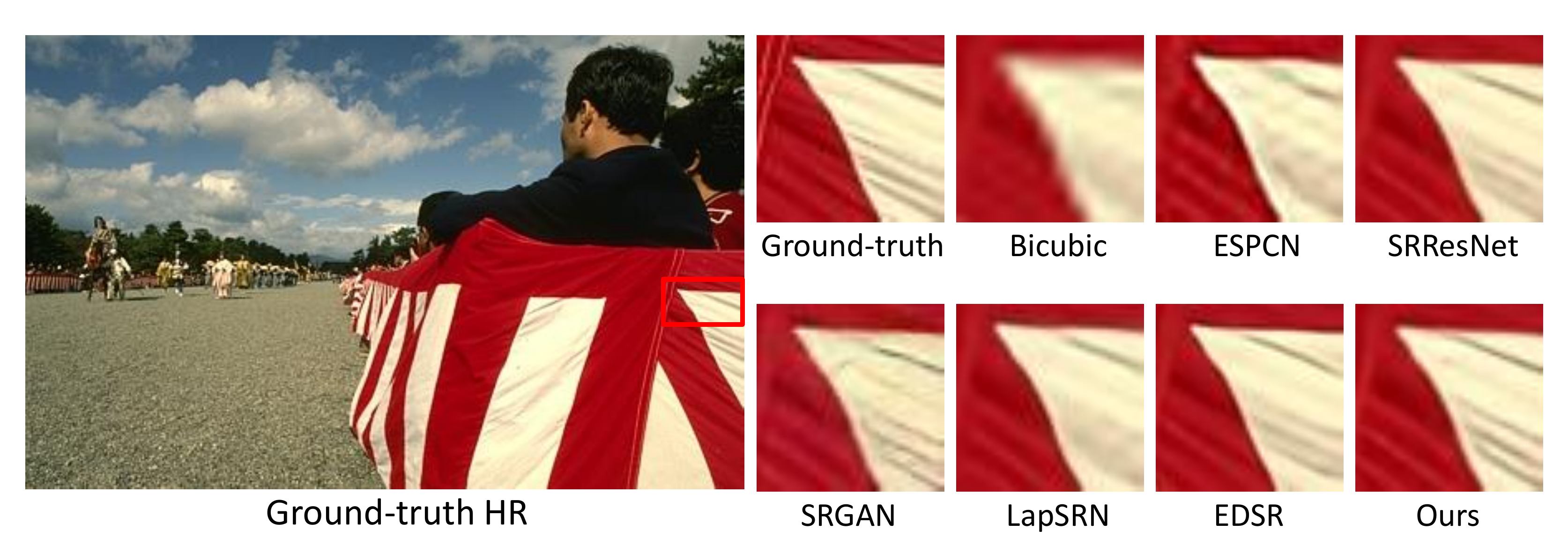}
	\includegraphics[width = 0.9\columnwidth]{./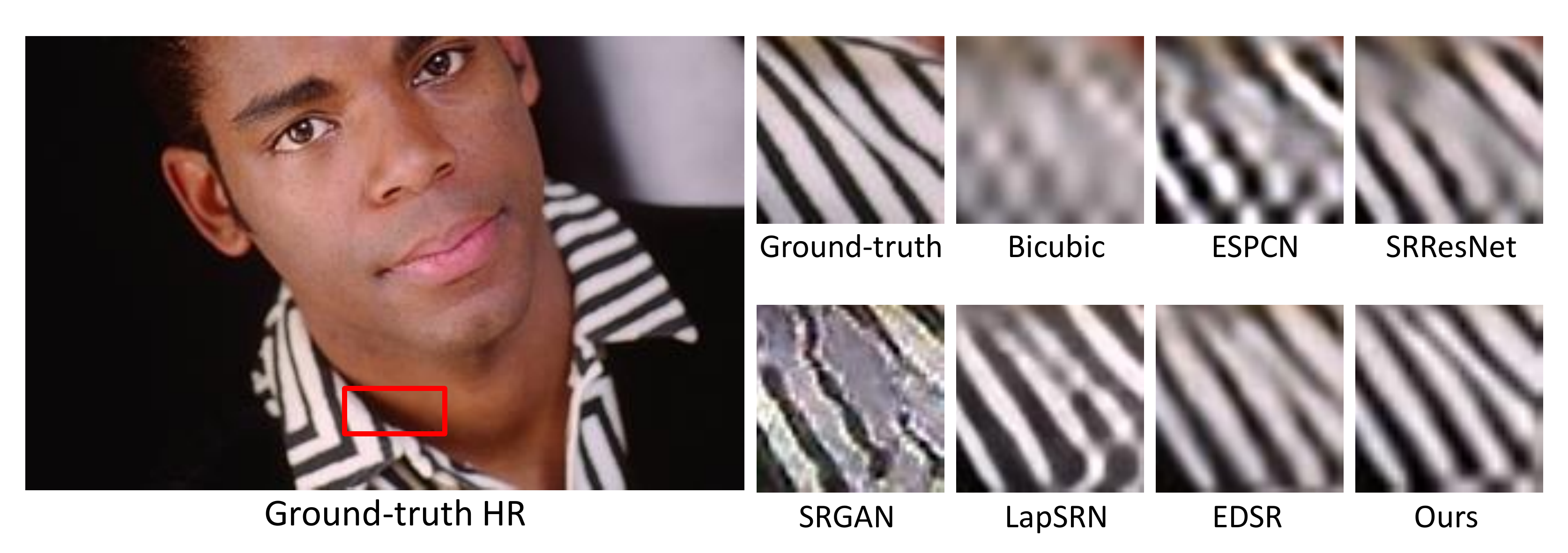}
	\includegraphics[width = 0.9\columnwidth]{./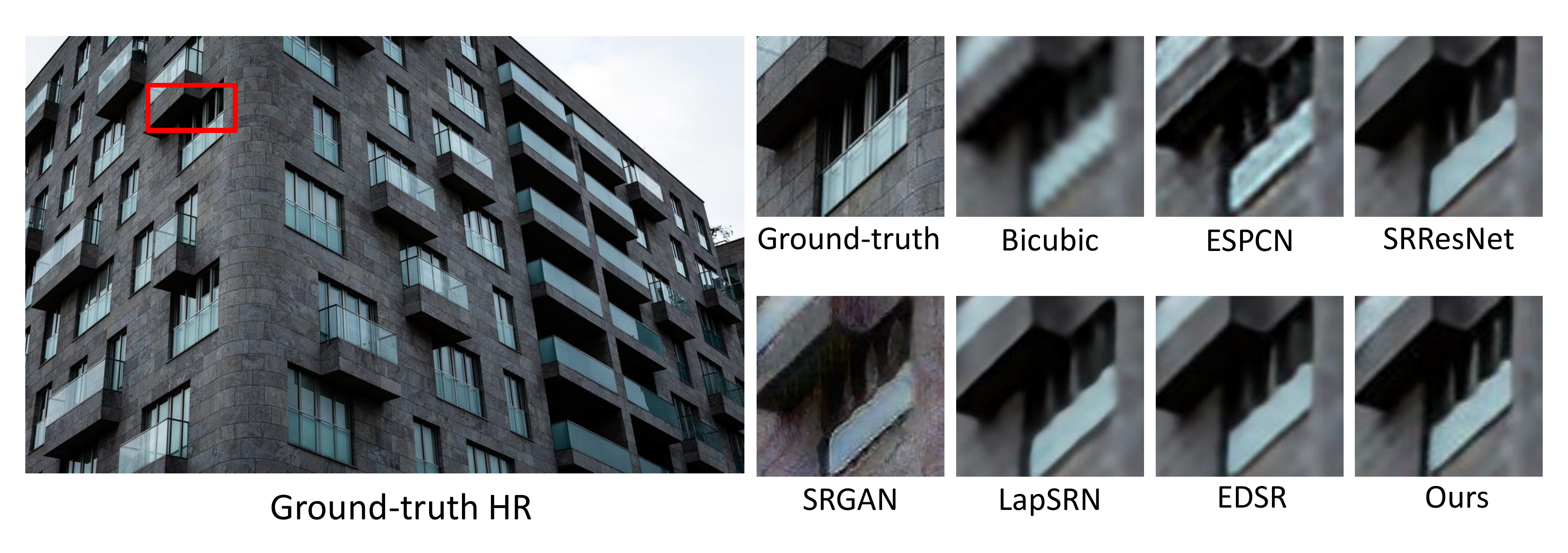}
	\includegraphics[width = 0.9\columnwidth]{./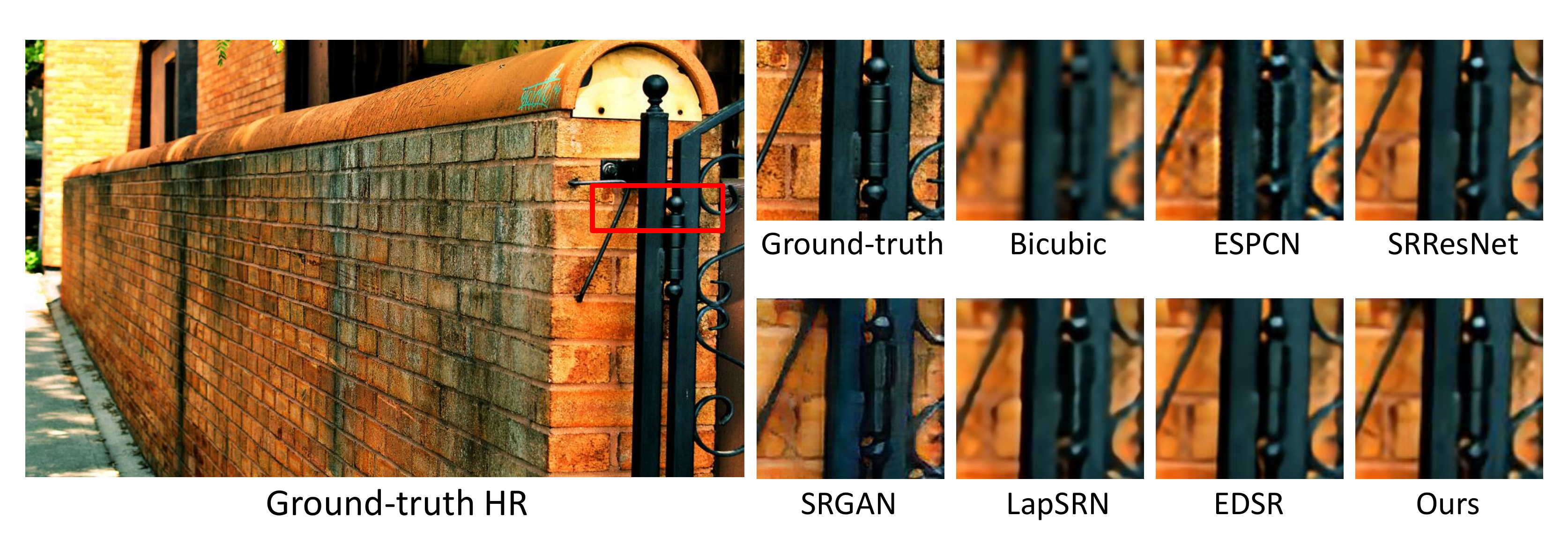}
	\caption{\qi{More results of visual comparison for $8\times$ upscaling super-resolution.}}
	\label{fig:image1}
\end{figure}

\end{document}